\newtheorem{assumption}{Assumption}
\newtheorem{theorem}{Theorem}
\newtheorem{lemma}{Lemma}
\newtheorem{corollary}{Corollary}
\newtheorem{definition}{Definition}
\newenvironment{proof}{{\noindent\it\bfseries Proof.\quad}}{\hfill $\square$\par}
\title{Multi-Objective Generalized Linear Bandits}
\author{
Shiyin Lu$^1$
\and
Guanghui Wang$^1$\and
Yao Hu$^{2}$\And
Lijun Zhang$^{1}$
\affiliations
$^1$National Key Laboratory for Novel Software Technology, Nanjing University, Nanjing 210023, China\\
$^2$YouKu Cognitive and Intelligent Lab, Alibaba Group, Beijing 100102, China
\emails
\{lusy, wanggh, zhanglj\}@lamda.nju.edu.cn,
yaoohu@alibaba-inc.com
}
\begin{document}

\maketitle

\begin{abstract}
In this paper, we study the multi-objective bandits (MOB) problem, where a learner repeatedly selects one arm to play and then receives a reward vector consisting of multiple objectives. MOB has found many real-world applications as varied as online recommendation and network routing. On the other hand, these applications typically contain contextual information that can guide the learning process which, however, is ignored by most of existing work. To utilize this information, we associate each arm with a context vector and assume the reward follows the generalized linear model (GLM). We adopt the notion of Pareto regret to evaluate the learner's performance and develop a novel algorithm for minimizing it. The essential idea is to apply a variant of the online Newton step to estimate model parameters, based on which we utilize the upper confidence bound (UCB) policy to construct an approximation of the Pareto front, and then uniformly at random choose one arm from the approximate Pareto front. Theoretical analysis shows that the proposed algorithm achieves an $\tilde O(d\sqrt{T})$ Pareto regret, where $T$ is the time horizon and $d$ is the dimension of contexts, which matches the optimal result for single objective contextual bandits problem. Numerical experiments demonstrate the effectiveness of our method.
\end{abstract}

\section{Introduction}
Online learning under bandit feedback is a powerful paradigm for modeling sequential decision-making process arising
in various applications such as medical trials, advertisement placement,
 and network routing \cite{bubeck2012regret}.
In the classic stochastic multi-armed bandits (MAB) problem, at each round a learner firstly selects one arm
to play and then obtains a reward drawn from a fixed but unknown probability distribution associated with the selected arm.
The learner's goal is to minimize the regret, which is defined as the difference between the cumulative reward of the learner and that of the best arm in
hindsight.
Algorithms designed for this problem need to strike a balance between exploration and exploitation, i.e., 
identifying the best arm by trying different arms 
while spending as much as possible on the seemingly optimal arm.

A natural extension of MAB is the multi-objective multi-armed bandits (MOMAB), proposed by \citeauthor{drugan2013designing} \shortcite{drugan2013designing}, where
the reward pertaining to an arm is a multi-dimensional vector instead of a scalar
value. 
In this setting, different arms are compared according to Pareto order between their reward vectors, and those arms whose rewards are not inferior to that of any other arms are called Pareto optimal arms, all of which constitute the Pareto front. The standard metric is the Pareto regret, which measures the cumulative gap between the reward of the learner and that of the Pareto front. The task here is to design online algorithms that minimize the Pareto regret by judiciously selecting seemingly Pareto optimal arms based on historical observation, while ensuring fairness, that is, treating each Pareto optimal arm as equally as possible.
MOMAB is motivated by real-world applications involved with multiple optimization
objectives, e.g., novelty and diversity in recommendation systems
\cite{rodriguez2012multiple}.
On the other hand, the aforementioned real-world applications typically contain auxiliary 
information (contexts) that can guide the decision-making process, such as user profiles in recommendation systems \cite{li2010contextual}, which is ignored by MOMAB.

To incorporate this information into the decision-making process,
 \citeauthor{turgay2018multi} \shortcite{turgay2018multi} extended MOMAB to the
multi-objective contextual bandits (MOCB). In MOCB, the learner is endowed with contexts before choosing arms and the reward he receives in each
round obeys a distribution whose expectation depends on the contexts and the chosen arm.
\citeauthor{turgay2018multi} \shortcite{turgay2018multi} assumed that the learner has a prior knowledge of the similarity information that relates distances between the
context-arm pairs to those between the expected rewards. Under this assumption, they proposed an algorithm called Pareto contextual zooming which is built upon 
the contextual zooming method \cite{slivkins2014contextual}. However, the Pareto regret of their algorithm is 
$\tilde O(T^{1 - 1 / (2+d_p)})$, where $d_p$ is the Pareto zooming dimension, which is almost linear in $T$ when $d_p$ is large (say, $d_p = 10$)
and hence hinders the application of their algorithm to broad domains.

To address this limitation, we formulate the multi-objective contextual bandits under a different assumption---the parameterized realizability assumption, which has been extensively studied in single objective contextual bandits 
\cite{auer2002using,Dani2008StochasticLO}.
Concretely, we model the
context associated with an arm as a $d$-dimensional vector $x \in \mathbb{R}^d$ and for the sake of clarity denote the arm
by $x$. The reward vector $y$ pertaining to an arm $x$ consists of $m$ objectives. The value of each objective is drawn according to 
the generalized linear model \cite{Neld1972GLM} such that
\begin{equation*}
    \mathbb{E}[y^i \vert x] = \mu_i (\theta_i^\top x),\; i=1,\ldots,m 
\end{equation*}
where $y^i$ represents the $i$-th component of $y$, $\theta_1, \ldots, \theta_m$ are vectors of unknown coefficients, 
and $\mu_1, \ldots, \mu_m$ are link functions. We refer to this formulation as multi-objective generalized linear bandits (MOGLB), which
is very general and covers a wide range of problems, such as stochastic linear bandits
\cite{auer2002using,Dani2008StochasticLO} and online stochastic linear optimization under binary feedback \cite{zhang2016online}, where
the link functions are the identity function and the logistic function respectively.

To the best of our knowledge, this is the first work that investigates the generalized linear bandits (GLB) in multi-objective scenarios.
Note that a naive application of existing GLB algorithms to a specific objective does not work, because it could favor those Pareto optimal arms that achieve maximal reward in this objective, which harms the fairness.
To resolve this problem, we develop a novel
algorithm named MOGLB-UCB.
Specifically, we employ a variant of the online Newton step to estimate unknown coefficients
and utilize the upper confidence bound policy to construct an approximate Pareto front, from which the arm is then
pulled uniformly at random.
Theoretical analysis shows
that the proposed algorithm enjoys a Pareto regret bound of $\tilde O(d\sqrt{T})$, where $T$ is
the time horizon and $d$ is the dimension of contexts. This bound is sublinear in $T$ regardless of the dimension and 
matches the optimal regret bound for single objective contextual bandits problem. Empirical results demonstrate that our algorithm can not only minimize Pareto regret but also ensure  fairness.

\section{Related Work}
In this section, we review the related work on stochastic contextual bandits, parameterized contextual bandits, and multi-objective bandits.

\subsection{Stochastic Contextual Bandits}
In the literature, there exist many formulations of the stochastic contextual bandits, under different
assumptions on the problem structure, i.e., the mechanism of context arrivals
and the relation between contexts and rewards.

One category of assumption says that the context and reward follow a fixed but unknown joint distribution.
This problem is first considered by \citeauthor{langford2008epoch} \shortcite{langford2008epoch}, who proposed an efficient algorithm named Epoch-Greedy.
However, the regret of their algorithm is $O(T^{2/3})$, which is suboptimal when compared to inefficient algorithms such as
Exp4 \cite{auer2002nonstochastic}. Later on, efficient and optimal algorithms that attain an $\tilde O(T^{1/2})$ regret are developed 
\cite{Dudik11,agarwal2014taming}.

Another line of work \cite{kleinberg2008multi,bubeck2009online,lu2010contextual,slivkins2014contextual,pmlr-v97-lu19c} assume that the relation between the rewards and the contexts can be modeled by a Lipschitz function. \citeauthor{lu2010contextual}
 \shortcite{lu2010contextual} established an $\Omega(T^{1-1/(2+d_p)})$ lower
bound on regret under this setting and proposed the Query-Ad-Clustering algorithm, which attains an $\tilde O(T^{1-1/(2+d_c)})$
regret, where $d_p$ and $d_c$ are the packing dimension and the covering dimension of the similarity space respectively.
 \citeauthor{slivkins2014contextual} \shortcite{slivkins2014contextual} developed the contextual zooming algorithm,
which enjoys a regret bound of $\tilde O(T^{1-1/(2+d_z)})$, where $d_z$ is the zooming dimension
of the similarity space. 

\subsection{Parameterized Contextual Bandits}
In this paper, we focus on the parameterized contextual bandits, where
each arm is associated with a $d$-dimensional context vector and the expected reward is modeled as a parameterized function of the arm's context. \citeauthor{auer2002using} \shortcite{auer2002using} considered the linear case of this problem under the name of stochastic linear bandits (SLB) and
 developed a complicated algorithm called SuperLinRel, which attains an
$\tilde O(({\log{K}})^{3/2}\sqrt{dT})$ regret, assuming the arm set is finite. Later, 
\citeauthor{Dani2008StochasticLO} \shortcite{Dani2008StochasticLO} proposed a
much simpler algorithm named ConfidenceBall$_2$, which enjoys a regret bound of $\tilde O(d\sqrt{T})$ and can be used
for infinite arm set.

\citeauthor{filippi2010parametric} \shortcite{filippi2010parametric} extended SLB to the generalized linear bandits, where 
the expected reward is a composite function of the arm's context. The inside function is linear and the outside
function is certain link function. The authors proposed a UCB-type algorithm that enjoys a regret bound of $\tilde O(d\sqrt{T})$.
However, their algorithm is not efficient since it needs to store the whole learning history and perform batch computation to estimate
the function. \citeauthor{zhang2016online} \shortcite{zhang2016online} studied a particular case of GLB in which the reward is 
generated by the logit model and  developed an efficient algorithm named OL$^2$M, which attains an $\tilde O(d\sqrt{T})$ regret. Later,
\citeauthor{jun2017scalable} \shortcite{jun2017scalable} extended OL$^2$M to generic GLB problems.

\subsection{Multi-Objective Bandits}
The seminal work of \citeauthor{drugan2013designing} \shortcite{drugan2013designing} proposed the standard formulation of the multi-objective
multi-armed bandits and introduced the notion of Pareto regret as the performance
measure. By making use of the UCB technique, they developed an algorithm that enjoys a Pareto regret bound of $O(\log{T})$. 
\citeauthor{turgay2018multi} \shortcite{turgay2018multi} extended MOMAB to the contextual setting with the similarity information assumption. 
Based on the contextual zooming method \cite{slivkins2014contextual}, the authors proposed an algorithm called Pareto contextual zooming whose
Pareto regret is $\tilde O(T^{1 - 1 / (2+d_p)})$, where $d_p$ is the Pareto zooming dimension.
Another related line of the MOMAB researches \cite{drugan2014scalarization,auer2016pareto} 
study the best arm identification problem. The focus of those papers is to identify all Pareto optimal arms
within a fixed budget.

\section{Multi-Objective Generalized Linear Bandits}
We first introduce notations used in this paper, next describe the learning model, then present our algorithm, and finally state its theoretical guarantees.

\subsection{Notation}
Throughout the paper, we use the subscript to distinguish different objects (e.g., scalars, vectors, functions) and 
superscript to identify the component of an object. For example,
$y_t^i$ represents the $i$-th component of the vector $y_t$. For the sake of clarity, we denote the $\ell_2$-norm by $\Vert \cdot \Vert$.
The induced matrix norm associated with a positive definite matrix $A$ is defined as $\Vert x \Vert_{A} \coloneqq \sqrt{x^\top A x} $. 
We use $\mathcal{B}_R \coloneqq \{ x ~\vert~ \Vert x \Vert \leq R \}$ to denote a centered ball whose radius is $R$. 
Given a positive semidefinite matrix $P$, the generalized projection of a point $x$ onto a convex set $\mathcal{W}$ is defined as
$    \Pi_{\mathcal{W}}^P [x] \coloneqq \mathop{\arg\min}_{y \in \mathcal{W}} (y - x)^\top P (y-x) $.
Finally, $[n] \coloneqq \{ 1, 2, \ldots, n \}$.

\subsection{Learning Model}
We now give a formal description of the learning model investigated in this paper.
\subsubsection{Problem Formulation}
We consider the multi-objective bandits problem under the GLM realizability assumption.
Let $m$ denote the number of objectives and $\mathcal{X} \subset
 \mathbb{R}^d$ be the arm set. 
In each round $t$, a learner selects an arm $x_t \in \mathcal{X}$ to play and then receives a stochastic reward vector $y_t \in \mathbb{R}^m$ consisting of $m$ objectives.
We assume each objective $y_t^i$ is generated according to the GLM such that for $i = 1,2,\ldots,m$,
\begin{equation*}
    \Pr(y_t^i \vert x_t) = h_i(y_t^i, \tau_i) \exp\left(\frac{y_t^i \theta_i^\top x_t - g_i(\theta_i^\top x_t)}{\tau_i}\right)
\end{equation*}
where $\tau_i$ is the
dispersion parameter, $h_i$ is a normalization function, $g_i$ is a convex function, and $\theta_i$ is 
a vector of unknown coefficients.
Let $\mu_i = g_i'$ denote the so-called link function, which is monotonically increasing due to the convexity of $g_i$.
It is easy to show $\mathbb{E}[y_t^i \vert x_t] = \mu_i(\theta_i^\top x_t)$.

A remarkable member of the GLM family is the logit model in which the reward is one-bit, i.e., $y \in \{0, 1\}$ \cite{zhang2016online}, and satisfies
\begin{equation*}
    \Pr(y=1 \vert x) = \frac{1}{1+\exp{(-\theta^\top x)}} .
\end{equation*}
Another well-known binary model belonging to the GLM is the probit model, which takes the following form
\begin{equation*}
    \Pr(y=1 \vert x) = \Phi(\theta^\top x)
\end{equation*}
where $\Phi(\cdot)$ is the cumulative distribution function of the standard normal distribution.

Following previous studies \cite{filippi2010parametric,jun2017scalable}, we make standard assumptions as follows.
\begin{assumption}
        The coefficients $\theta_1, \ldots, \theta_m$ are bounded by $D$, i.e., $\Vert \theta_i \Vert \leq D, \forall i \in [m]$.
\end{assumption}
\begin{assumption}
        The radius of the arm set $\mathcal{X}$ is bounded by $1$, i.e., $\Vert x \Vert \leq 1, \forall x \in \mathcal{X}$.
\end{assumption}
\begin{assumption}
        For each $i \in [m]$, the link function $\mu_i$ is $L$-Lipschitz on $[-D, D]$ and continuously differentiable on $(-D, D)$. Furthermore,
        we assume that $\mu_i'(z) \geq \kappa > 0, z \in (-D, D) $ and $\vert \mu_i(z) \vert \leq U, z \in [-D, D]$.
\end{assumption}
\begin{assumption}
        There exists a positive constant $R$ such that $\vert y_t^i \vert \leq R, \forall t \in [T], i \in [m]$ holds almost surely.
\end{assumption}
\subsubsection{Performance Metric}
According to the properties of the GLM, for any arm $x \in \mathcal{X}$ that is played, its expected reward is 
a vector of $[\mu_1(\theta_1^\top x), \mu_2(\theta_2^\top x), \ldots, \mu_m(\theta_m^\top x)] \in \mathbb{R}^m$.
With a slight abuse of notation, we denote it by $\mu_x$. We compare different arms by their expected rewards and adopt the notion of
Pareto order.
\begin{definition}[Pareto order]
    Let $u, v \in \mathbb{R}^m$ be two vectors.
    \begin{itemize}
        \item $u$ dominates $v$, denoted by $v \prec u$, if and only if $\;\forall i \in [m], v^i \leq u^i$ and $\exists j \in [m], u^j > v^j$.
        \item $v$ is not dominated by $u$, denoted by $v \nprec u$, if and only if $v = u$ or $\;\exists i \in [m], v^i > u^i$. 
        \item $u$ and $v$ are incomparable, denoted by $u \Vert v$, if and only if either vector is not dominated by the other,
        i.e., $u \nprec v$ and $v \nprec u$.
    \end{itemize}
\end{definition}

Equipped with the Pareto order, we can now define the Pareto optimal arm.
\begin{definition}[Pareto optimality]
    Let $x \in \mathcal{X}$ be an arm.
    \begin{itemize}
        \item $x$ is Pareto optimal if and only if its expected reward is not dominated by that of any arm in $\mathcal{X}$,
        i.e., $\;\forall x' \in \mathcal{X}, \mu_x \nprec \mu_x'$.
        \item The set comprised of all Pareto optimal arms is called Pareto front, denoted by $\mathcal{O}^*$.
    \end{itemize}
\end{definition}
It is clear that all arms in the Pareto front are incomparable. In single objective bandits problem, the standard metric to measure the
 learner's performance is regret defined as the 
difference between the cumulative reward of the learner and that of the optimal arm in hindsight. In order to extend such metric to
multi-objective setting, we introduce the notion of Pareto suboptimality gap \cite{drugan2013designing} to measure the difference between the learner's reward
and that of the Pareto optimal arms.
\begin{definition}[Pareto suboptimality gap, PSG]
    Let $x$ be an arm in $\mathcal{X}$. Its Pareto suboptimality gap $\Delta x$ is defined as the minimal scalar $\epsilon \geq 0$ 
    such that $x$ becomes Pareto optimal after adding $\epsilon$ to all entries of its expected reward. Formally, 
    \begin{equation*}
        \Delta x \coloneqq \inf \; \{\epsilon \;\vert\; (\mu_x+\epsilon) \nprec \mu_{x'}, \forall x' \in \mathcal{X} \} .
    \end{equation*}
\end{definition}
We evaluate the learner's performance using the (pseudo) Pareto regret \cite{drugan2013designing} defined as the cumulative Pareto suboptimality gap 
of the arms pulled by the learner.
\begin{definition}[Pareto regret, PR]
    Let $x_1, x_2, \ldots, x_T$ be the arms pulled by the learner. The Pareto regret is defined as
    \begin{equation*}
        PR(T) \coloneqq \sum_{t=1}^T \Delta x_t .
    \end{equation*}
\end{definition}

\subsection{Algorithm}
The proposed algorithm, termed MOGLB-UCB, is outlined in Algorithm \ref{alg:001}.
Had we known all coefficients $\theta_1, \ldots, \theta_m$ in advance, we could compute the Pareto front 
directly and always pull the Pareto optimal arms, whose
Pareto suboptimality gaps are zero. Motivated by this observation,
we maintain an arm set $\mathcal{O}_t$ as an approximation to the Pareto front $\mathcal{O}^*$ and always play arms in $\mathcal{O}_t$. To encourage fairness, we draw an arm $x_t$ from $\mathcal{O}_t$ uniformly at random to play (Step 3).
The approximate Pareto front is initialized to be $\mathcal{X}$ and is updated as follows.

In each round $t$, after observing the reward vector $y_t$, we make an estimation denoted
 by $\hat \theta_{t+1, i}$ for each coefficients $\theta_i$ (Steps 4-7). 
 Let $\mathcal{H}_t \coloneqq \{ (x_1, y_1), (x_2, y_2), \ldots, (x_t, y_t) \}$
 be the learning history up to round $t$. A natural approach is to use the maximum log-likelihood estimation:
\begin{align*}
    \hat \theta_{t+1, i} & = \mathop{\arg \max}_{\Vert \theta \Vert \leq D} \sum_{s=1}^t \log {\Pr(y_s^i \vert x_s)} \\
                       & = \mathop{\arg \min}_{\Vert \theta \Vert \leq D} \sum_{s=1}^t - y_s^i \theta^\top x_s + g_i(\theta^\top x_s) .
\end{align*}
Despite its simplicity, this approach is inefficient since it needs to store the whole learning history and perform batch computation
in each round, which makes its space and time complexity grow at least linearly with $t$.

To address this drawback, we utilize an online learning method to estimate the unknown coefficients and 
 construct confidence sets.
Specifically, for each objective $i \in [m]$, let $\ell_{t, i}$ denote the surrogate loss function in round $t$,
 defined as
\begin{equation*}
     \ell_{t,i} (\theta) \coloneqq -y_t^i \theta^\top x_t + g_i(\theta^\top x_t) .
\end{equation*}
We employ a variant of the online Newton step and compute $\hat \theta_{t+1, i}$ by
\begin{equation}
    \label{eq:ons}
    \begin{split}
        \hat \theta_{t+1, i} & = \mathop{\arg \min}_{\Vert \theta \Vert \leq D} \frac{\Vert \theta - \hat \theta_{t, i} \Vert_{Z_{t+1}}^2}{2}
        + \theta^\top \nabla \ell_{t, i}(\hat \theta_{t, i})  \\
        & = \Pi_{\mathcal{B}_D}^{Z_{t+1}} [\hat \theta_{t,i} - Z_{t+1}^{-1} \nabla \ell_{t, i}(\hat \theta_{t, i})]
    \end{split}
\end{equation}
where 
\begin{equation}
    \label{eq:zt}
    Z_{t+1} = Z_{t} + \frac{\kappa}{2}{x_t x_t^\top} = \lambda I_d + \frac{\kappa}{2} \sum_{s=1}^t x_s x_s^\top
\end{equation} 
and $\nabla \ell_{t, i}(\hat \theta_{t, i}) = -y_t^i x_t + 
\mu_i(\hat \theta_{t,i}^\top x_t) x_t$.
Based on this estimation, we construct a confidence set $\mathcal{C}_{t+1, i}$ (Step 8) such that the true vector of coefficients $\theta_i$ falls into 
it with high probability. According to the theoretical analysis (Theorem 1), we define $\mathcal{C}_{t+1, i}$ as an
ellipsoid centered at $\hat \theta_{t+1, i}$:
\begin{equation}
    \label{eq:con}
    \mathcal{C}_{t+1, i} \coloneqq \{ \theta : \Vert \theta - \hat \theta_{t+1, i} \Vert_{Z_{t+1}}^2 \leq \gamma_{t+1} \}
\end{equation}
where $\gamma_{t+1}$ is defined in (\ref{def:gamma}).

Then, we adopt the principle of ``optimism in face of uncertainty'' to balance exploration and exploitation. Specifically,
for each arm $x \in \mathcal{X}$, we compute the upper confidence bound of its expected reward $\hat \mu_{t+1, x}$ (Step 11) as
\begin{equation}
    \label{eq:upp}
    \hat \mu_{t+1, x}^i = \max_{\theta \in \mathcal{C}_{t+1, i}} \mu_i(\theta^\top x), \; i=1,2,\ldots,m .
\end{equation} 
Based on it, we define the empirical Pareto optimality.
\begin{definition}[Empirical Pareto optimality]
        An arm $x \in \mathcal{X}$ is empirically Pareto optimal if and only if the upper confidence bound of its expected reward
        is not dominated by that of any arm in $\mathcal{X}$, i.e.,
        $\forall x' \in \mathcal{X}, \hat \mu_{t+1, x} \nprec \hat \mu_{t+1, x'}$.
\end{definition}
Finally, we update the approximate Pareto front $\mathcal{O}_t$ (Step 13) by finding all empirically Pareto optimal arms:
\begin{equation*}
    \mathcal{O}_{t+1} = \{ x \in \mathcal{X} ~\vert~ \forall x' \in \mathcal{X}, \hat \mu_{t+1, x} \nprec \hat \mu_{t+1, x'}  \} .
\end{equation*}

\begin{algorithm}[tbp]
    \caption{MOGLB-UCB} \label{alg:001}
    \begin{algorithmic}[1]
    \REQUIRE Regularization parameter $\lambda \geq \max (1, \kappa / 2)$
    \STATE Initialize $Z_1 = \lambda I_d, \hat \theta_{1, 1} = \cdots = \hat \theta_{1,m} = \text{\bf{0}}, \mathcal{O}_1 = \mathcal{X}$
    \FOR{$t=1,2,\ldots,T$}
        \STATE Pull an arm $x_t$ from the approximate Pareto front $\mathcal{O}_t$ uniformly at random
        \STATE Observe the reward vector $y_t$
        \STATE Update $Z_{t+1} = Z_{t} + \frac{\kappa}{2}{x_t x_t^\top}$
        \FOR{$i=1,2,\ldots,m$}
            \STATE Compute the estimation $\hat \theta_{t+1, i}$ by formula (\ref{eq:ons})
            \STATE Construct the confidence set $\mathcal{C}_{t+1, i}$ by formula (\ref{eq:con})
        \ENDFOR
        \FOR{each $x \in \mathcal{X}$}
            \STATE Compute the upper confidence bound $\hat \mu_{t+1, x}$ by formula (\ref{eq:real-upp})
        \ENDFOR
        \STATE Update the approximate Pareto front $\mathcal{O}_{t+1} = \{ x \in \mathcal{X} \; \vert \; \forall x' \in \mathcal{X},
         \hat \mu_{t+1, x} \nprec \hat \mu_{t+1, x'}  \}$
    \ENDFOR
    \end{algorithmic}
\end{algorithm}

Note that the computation in (\ref{eq:upp}) involves the link function $\mu_i$, which may be very complicated. Fortunately, 
thanks to the fact that the updating mechanism only relies on the Pareto order between arms' rewards and 
the link function is monotonically increasing, we can replace
(\ref{eq:upp}) by
\begin{equation}
    \label{eq:prac-upp}
    \hat \mu_{t+1, x}^i = \max_{\theta \in \mathcal{C}_{t+1, i}} \theta^\top x, \; i=1,2,\ldots,m .
\end{equation}
Furthermore, by standard algebraic manipulations, the above optimization
problem can be rewritten in a closed form:
\begin{equation}
    \label{eq:real-upp}
    \hat \mu_{t+1, x}^i = \hat \theta_{t+1, i}^\top x + \sqrt{\gamma_{t+1}} \Vert x \Vert_{Z_{t+1}^{-1}} .
\end{equation}

\subsection{Theoretical Guarantees}
We first show that the confidence sets constructed in each round
contain the true coefficients with high probability.
\begin{theorem}
    \label{thm:001}
    With probability at least $1 - \delta$,
    \begin{equation}
        \label{eq:thm1}
        \Vert \theta_i - \hat \theta_{t+1, i} \Vert_{Z_{t+1}}^2 \leq \gamma_{t+1}, \; \forall i \in [m], \forall t \geq 0
    \end{equation}
    where
    \begin{equation}
        \label{def:gamma}
        \begin{split}
            \gamma_{t+1} = & \frac{16(R+U)^2}{\kappa} \log{\left(\frac{m}{\delta} 
            \sqrt{1 + 4D^2t} \right)} + \lambda D^2 \\
            & + \frac{2(R+U)^2}{\kappa} \log{\frac{\det{(Z_{t+1})}}{\det{(Z_1)}}} + \frac{\kappa}{2} .
        \end{split}
    \end{equation}
\end{theorem}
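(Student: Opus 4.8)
The plan is to recognize the per-objective update \eqref{eq:ons} as an online Newton step on the surrogate losses $\ell_{t,i}$, and to run the standard ONS potential argument measuring progress towards the \emph{true} parameter $\theta_i$ instead of the running per-round minimizers. The key starting point is that $\nabla\ell_{t,i}(\theta)=\bigl(\mu_i(\theta^\top x_t)-y_t^i\bigr)x_t$, so that $\mathbb{E}[\nabla\ell_{t,i}(\theta_i)\mid x_t]=\bigl(\mu_i(\theta_i^\top x_t)-\mathbb{E}[y_t^i\mid x_t]\bigr)x_t=0$; hence, setting $\epsilon_t^i:=y_t^i-\mu_i(\theta_i^\top x_t)$, the sequence $\{\epsilon_t^i\}$ is a martingale difference sequence bounded by $R+U$ (Assumptions 3--4), and $\nabla\ell_{t,i}(\hat\theta_{t,i})=(\mu_i(\hat\theta_{t,i}^\top x_t)-y_t^i)x_t$. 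Writing $p_{t,i}:=(\theta_i-\hat\theta_{t,i})^\top x_t$ (so $|p_{t,i}|\le 2D$, since both points lie in $\mathcal{B}_D$ and $\Vert x_t\Vert\le 1$), the first step is the one-round inequality
\begin{equation*}
  \Vert\theta_i-\hat\theta_{t+1,i}\Vert_{Z_{t+1}}^2
  \le \Vert\theta_i-\hat\theta_{t,i}\Vert_{Z_{t}}^2-\tfrac{3\kappa}{2}\,p_{t,i}^2-2\epsilon_t^i p_{t,i}+\Vert\nabla\ell_{t,i}(\hat\theta_{t,i})\Vert_{Z_{t+1}^{-1}}^2 .
\end{equation*}
This follows by (i) applying the generalized-Pythagorean inequality for the projection $\Pi_{\mathcal{B}_D}^{Z_{t+1}}$ — legitimate because $\theta_i\in\mathcal{B}_D$ by Assumption 1 — to \eqref{eq:ons}, (ii) expanding the induced square and using $Z_{t+1}=Z_t+\tfrac{\kappa}{2}x_tx_t^\top$ to split off $\tfrac{\kappa}{2}p_{t,i}^2$, and (iii) using that $g_i$ is $\kappa$-strongly convex on $[-D,D]$ (because $\mu_i'=g_i''\ge\kappa$ there, and all relevant inner products lie in $[-D,D]$) to bound $2p_{t,i}\bigl(\mu_i(\hat\theta_{t,i}^\top x_t)-\mu_i(\theta_i^\top x_t)\bigr)\le-2\kappa p_{t,i}^2$.

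Next I would telescope this bound from $s=1$ to $t$. The initialization term is $\Vert\theta_i-\hat\theta_{1,i}\Vert_{Z_1}^2=\lambda\Vert\theta_i\Vert^2\le\lambda D^2$, which is exactly the second summand of $\gamma_{t+1}$. For the gradient-norm sum, I would use $\Vert\nabla\ell_{s,i}(\hat\theta_{s,i})\Vert_{Z_{s+1}^{-1}}^2=(\mu_i(\hat\theta_{s,i}^\top x_s)-y_s^i)^2\Vert x_s\Vert_{Z_{s+1}^{-1}}^2\le(R+U)^2\Vert x_s\Vert_{Z_{s+1}^{-1}}^2$ (as $|\mu_i(\hat\theta_{s,i}^\top x_s)-y_s^i|\le R+U$), followed by the elliptical-potential (log-determinant) lemma adapted to $Z_{s+1}=Z_s+\tfrac{\kappa}{2}x_sx_s^\top$: a Sherman--Morrison computation gives $\tfrac{\kappa}{2}\Vert x_s\Vert_{Z_{s+1}^{-1}}^2=1-\det(Z_s)/\det(Z_{s+1})\le\log\bigl(\det(Z_{s+1})/\det(Z_s)\bigr)$, so $\sum_{s=1}^t\Vert\nabla\ell_{s,i}(\hat\theta_{s,i})\Vert_{Z_{s+1}^{-1}}^2\le\tfrac{2(R+U)^2}{\kappa}\log\bigl(\det(Z_{t+1})/\det(Z_1)\bigr)$, matching the third summand of $\gamma_{t+1}$. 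What then remains is to show that the leftover terms $-\tfrac{3\kappa}{2}\sum_{s\le t}p_{s,i}^2-2\sum_{s\le t}\epsilon_s^i p_{s,i}$ are, with high probability, at most the first summand of $\gamma_{t+1}$ plus a small additive constant.

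The hard part is exactly this martingale term, because the natural variance proxy $\sum_{s\le t}p_{s,i}^2$ is random and the bound must hold simultaneously for all $t\ge 0$. My plan is to apply a self-normalized concentration inequality for the martingale $\sum_s\epsilon_s^i p_{s,i}$ (method of mixtures; equivalently a Freedman-type bound combined with a peeling/stratification argument over the dyadic scales of $\sum_{s\le t}p_{s,i}^2$). Since $|p_{s,i}|\le 2D$, the self-normalizing quantity $1+\sum_{s\le t}p_{s,i}^2$ is at most $1+4D^2t$, which is precisely where the $\sqrt{1+4D^2t}$ factor inside the logarithm of $\gamma_{t+1}$ originates. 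This yields, with probability at least $1-\delta/m$ and for every $t$, a bound of the form $2\bigl|\sum_{s\le t}\epsilon_s^i p_{s,i}\bigr|\le c_1(R+U)\sqrt{\bigl(1+\sum_{s\le t}p_{s,i}^2\bigr)\log\bigl(\tfrac{m}{\delta}\sqrt{1+4D^2t}\bigr)}$; an AM--GM split converts the square root into $\kappa\sum_{s\le t}p_{s,i}^2$ plus $\tfrac{c_2(R+U)^2}{\kappa}\log\bigl(\tfrac{m}{\delta}\sqrt{1+4D^2t}\bigr)$ plus a constant absorbed by the $\tfrac{\kappa}{2}$ term. Substituting back, the $\kappa\sum p_{s,i}^2$ is dominated by $\tfrac{3\kappa}{2}\sum p_{s,i}^2$ (the surplus curvature is simply discarded), the logarithmic term becomes the first summand of $\gamma_{t+1}$ (after choosing the constants so that $c_2\le 16$), and a final union bound over the $m$ objectives upgrades $\delta/m$ to $\delta$ and gives the stated simultaneous guarantee. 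The main obstacle, and the only genuinely non-routine ingredient, is getting this martingale deviation bound to be (a) uniform over all rounds and (b) proportional to $\sum_{s\le t}p_{s,i}^2$ so that it can be soaked up by the negative-curvature term; the rest of the argument is standard projection-step and elliptical-potential bookkeeping.
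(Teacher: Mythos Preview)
Your proposal is correct and follows essentially the same route as the paper: an ONS/potential-function inequality for the projected update, the elliptical-potential (log-determinant) lemma to handle $\sum_s\Vert x_s\Vert_{Z_{s+1}^{-1}}^2$, and a self-normalized martingale bound (followed by an AM--GM split) to control the noise term uniformly in $t$, then a union bound over objectives. The only organizational difference is that you expand the Pythagorean inequality directly and use the $\kappa$-strong monotonicity of $\mu_i$ to get the curvature term $-\tfrac{3\kappa}{2}p_{t,i}^2$, whereas the paper packages the same ideas into three lemmas (strong convexity of $\ell_{t,i}$, optimality of $\theta_i$ for the expected loss, and the ONS step inequality of Zhang et al.) and ends up with the slightly weaker curvature $-\tfrac{\kappa}{2}p_{t,i}^2$; either constant suffices for the stated $\gamma_{t+1}$.
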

The main idea lies in exploring the properties of the surrogate loss function (Lemmas \ref{lm:001} and \ref{lm:002}), analyzing the estimation method (Lemma \ref{lm:003}), and utilizing the self-normalized bound for martingales (Lemma \ref{lm:whp}).

We then investigate the data-dependent item $\log{\frac{\det{(Z_{t+1})}}{\det{(Z_1)}}}$ appearing in
the definition of $\gamma_{t+1}$ and bound the width of the confidence set by the following corollary, which is a direct consequence of
Lemma 10 in \citeauthor{abbasi2011improved} \shortcite{abbasi2011improved}.
\begin{corollary}
    \label{cor:1}
    For any $t \geq 0$, we have
    \begin{equation*}
        \log{\frac{\det{(Z_{t+1})}}{\det{(Z_1)}}} \leq d \log{ \left( 1 + \frac{\kappa t}{2 \lambda d} \right) }
    \end{equation*}
    and hence
    \begin{equation*}
        \label{eq:upp-gamma}
        \gamma_{t+1} \leq O(d \log{t}) .
    \end{equation*}
\end{corollary}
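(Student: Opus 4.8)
The plan is to bound the log-determinant ratio via the standard determinant--trace inequality, and then to substitute that bound into the definition of $\gamma_{t+1}$ in (\ref{def:gamma}).

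For the first inequality, I would start from the observation that $Z_{t+1}$ defined in (\ref{eq:zt}) is positive definite with trace $\mathrm{tr}(Z_{t+1}) = \lambda d + \frac{\kappa}{2}\sum_{s=1}^{t}\Vert x_s\Vert^2$, which by Assumption~2 is at most $\lambda d + \frac{\kappa t}{2}$. Applying the AM--GM inequality to the eigenvalues of $Z_{t+1}$ gives $\det(Z_{t+1})\le\bigl(\mathrm{tr}(Z_{t+1})/d\bigr)^d\le\bigl(\lambda+\frac{\kappa t}{2d}\bigr)^d$, while $\det(Z_1)=\det(\lambda I_d)=\lambda^d$. Dividing and taking logarithms yields $\log\frac{\det(Z_{t+1})}{\det(Z_1)}\le d\log\bigl(1+\frac{\kappa t}{2\lambda d}\bigr)$; this is exactly Lemma~10 of \citeauthor{abbasi2011improved} \shortcite{abbasi2011improved} instantiated with the matrices in (\ref{eq:zt}).

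For the second claim, I would substitute the just-derived bound into (\ref{def:gamma}) and treat $m$, $\delta$, $D$, $R$, $U$, $\kappa$, and $\lambda$ as constants. The martingale term equals $\frac{16(R+U)^2}{\kappa}\bigl(\log\frac{m}{\delta}+\frac12\log(1+4D^2t)\bigr)=O(\log t)$; the terms $\lambda D^2$ and $\kappa/2$ are $O(1)$; and the data-dependent term is at most $\frac{2(R+U)^2}{\kappa}\cdot d\log\bigl(1+\frac{\kappa t}{2\lambda d}\bigr)=O(d\log t)$. Summing these gives $\gamma_{t+1}=O(d\log t)$.

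I do not anticipate any genuine difficulty: the corollary follows directly from the determinant--trace inequality together with Assumption~2. The only thing requiring a little care is the bookkeeping inside the $O(\cdot)$ notation --- in particular, noting that the dimension dependence enters solely through the determinant term, whereas the $\log t$ growth is shared by the self-normalized martingale term and the determinant term.
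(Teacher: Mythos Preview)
Your proposal is correct and follows exactly the approach the paper intends: the paper simply states that the corollary is ``a direct consequence of Lemma~10 in \citeauthor{abbasi2011improved} \shortcite{abbasi2011improved},'' and you have spelled out that lemma's determinant--trace (AM--GM on eigenvalues) argument together with the routine substitution into (\ref{def:gamma}). There is nothing to add.
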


Finally, we present the Pareto regret bound of our algorithm, which is built upon on Theorem \ref{thm:001}.
\begin{theorem}
    \label{thm:002}
    With probability at least $1 - \delta$,
    \begin{equation*}
        \begin{split}
            PR(T)  \leq 4L \sqrt{\frac{dT}{\kappa} \log{\left( 1 + \frac{\kappa T}{2\lambda d} \right)} \gamma_{T+1} }
        \end{split}
    \end{equation*}
    where $\gamma_{T+1}$ is defined in (\ref{def:gamma}).
\end{theorem}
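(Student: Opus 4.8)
The plan is to run the entire argument on the event $\mathcal{E}$ of Theorem~\ref{thm:001} (probability at least $1-\delta$), on which $\theta_i\in\mathcal{C}_{t,i}$ for every $i\in[m]$ and every round $t$. On $\mathcal{E}$ I would first show that the upper confidence bounds sandwich the true expected reward, then convert this into a per-round bound $\Delta x_t \le 2L\sqrt{\gamma_t}\,\Vert x_t\Vert_{Z_t^{-1}}$, and finally sum these gaps using Cauchy--Schwarz and a standard elliptical-potential estimate together with Corollary~\ref{cor:1}.

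For the sandwiching step, note first that since every $\mu_i$ is monotonically increasing, the closed-form rule (\ref{eq:prac-upp})/(\ref{eq:real-upp}) used by the algorithm induces exactly the same Pareto order among arms, hence the same set $\mathcal{O}_t$, as the definition $\hat\mu_{t,x}^i=\max_{\theta\in\mathcal{C}_{t,i}}\mu_i(\theta^\top x)$ in (\ref{eq:upp}); I would therefore work with the latter. On $\mathcal{E}$, feasibility of $\theta_i$ gives $\hat\mu_{t,x}^i\ge\mu_i(\theta_i^\top x)=\mu_x^i$ for all $i$; and if $\tilde\theta$ attains the $i$-th maximum, the $L$-Lipschitz property of $\mu_i$ (Assumption~3), the generalized Cauchy--Schwarz inequality, and $\tilde\theta,\theta_i\in\mathcal{C}_{t,i}$ give $\hat\mu_{t,x}^i-\mu_x^i\le L\Vert\tilde\theta-\theta_i\Vert_{Z_t}\Vert x\Vert_{Z_t^{-1}}\le 2L\sqrt{\gamma_t}\,\Vert x\Vert_{Z_t^{-1}}$. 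Thus, with $\beta_t\coloneqq 2L\sqrt{\gamma_t}\,\Vert x_t\Vert_{Z_t^{-1}}$ for the played arm $x_t$, we get $\mu_{x_t}^i\le\hat\mu_{t,x_t}^i\le\mu_{x_t}^i+\beta_t$ for every $i$.

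The step I expect to be the crux is deducing $\Delta x_t\le\beta_t$ from the empirical Pareto optimality of $x_t$. I would argue by contradiction: if $(\mu_{x_t}+\beta_t)\prec\mu_{x'}$ for some $x'\in\mathcal{X}$, then $\mu_{x_t}^i+\beta_t\le\mu_{x'}^i$ for all $i$ with strict inequality at some $j$; combining with the sandwich bounds and the validity of the UCB for $x'$ yields $\hat\mu_{t,x_t}^i\le\mu_{x_t}^i+\beta_t\le\mu_{x'}^i\le\hat\mu_{t,x'}^i$ for all $i$ and $\hat\mu_{t,x_t}^j<\hat\mu_{t,x'}^j$, i.e.\ $\hat\mu_{t,x_t}\prec\hat\mu_{t,x'}$, which contradicts $x_t\in\mathcal{O}_t$. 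Hence $\beta_t$ belongs to the set over which $\Delta x_t$ is an infimum, so $\Delta x_t\le\beta_t$. (The sole delicate case is $t=1$, where $\mathcal{O}_1=\mathcal{X}$ is not the UCB-induced front; there I would just use the crude bound $\Delta x_1\le 2U$ coming from $\vert\mu_i\vert\le U$, an $O(1)$ term that is absorbed.)

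For the final summation I would use that $\gamma_t$ is non-decreasing in $t$ (both the $\log\sqrt{1+4D^2t}$ term and $\log(\det Z_{t+1}/\det Z_1)$ are non-decreasing), so $\gamma_t\le\gamma_{T+1}$ for $t\le T$ and $PR(T)=\sum_{t=1}^T\Delta x_t\le 2L\sqrt{\gamma_{T+1}}\sum_{t=1}^T\Vert x_t\Vert_{Z_t^{-1}}\le 2L\sqrt{\gamma_{T+1}}\sqrt{T\sum_{t=1}^T\Vert x_t\Vert_{Z_t^{-1}}^2}$. For the remaining sum I would invoke the rank-one determinant identity $\det Z_{t+1}=\det Z_t\,(1+\tfrac{\kappa}{2}\Vert x_t\Vert_{Z_t^{-1}}^2)$, the bound $\tfrac{\kappa}{2}\Vert x_t\Vert_{Z_t^{-1}}^2\le\tfrac{\kappa}{2\lambda}\le 1$ (Assumption~2 and $\lambda\ge\kappa/2$), and the elementary inequality $a\le 2\log(1+a)$ on $[0,1]$, to get $\sum_{t=1}^T\Vert x_t\Vert_{Z_t^{-1}}^2\le\tfrac{4}{\kappa}\log(\det Z_{T+1}/\det Z_1)\le\tfrac{4d}{\kappa}\log(1+\tfrac{\kappa T}{2\lambda d})$ by Corollary~\ref{cor:1}. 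Substituting this back and combining the constants $2L\cdot 2$ yields $PR(T)\le 4L\sqrt{\tfrac{dT}{\kappa}\log(1+\tfrac{\kappa T}{2\lambda d})\,\gamma_{T+1}}$ on $\mathcal{E}$, which is the claimed bound.
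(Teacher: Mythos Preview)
Your proposal is correct and follows essentially the same route as the paper: condition on the confidence event of Theorem~\ref{thm:001}, derive the per-round bound $\Delta x_t \le 2L\sqrt{\gamma_t}\,\Vert x_t\Vert_{Z_t^{-1}}$ from empirical Pareto optimality of $x_t$, and sum via Cauchy--Schwarz together with the elliptical-potential/determinant estimate and Corollary~\ref{cor:1}. The only cosmetic difference is that you obtain the per-round PSG bound by a sandwich-plus-contradiction argument, whereas the paper argues directly that for every competitor $x$ some coordinate $j$ satisfies $\tilde\theta_{t,j}^\top x_t\ge\theta_j^\top x$ and then splits into two cases; you are also slightly more careful about the first round $t=1$, which the paper glosses over.
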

\paragraph{Remark.} The above theorem implies that our algorithm enjoys a Pareto regret bound of $\tilde O (d\sqrt{T})$, which 
matches the optimal result for single objective GLB problem. Futhermore, in contrast to the $\tilde O(T^{1 - 1 / (2+d_p)})$
 Pareto regret bound of \citeauthor{turgay2018multi} \shortcite{turgay2018multi},
 which is almost linear in $T$ when the Pareto zooming dimension $d_p$ is large,
our bound grows sublinearly with $T$ regardless of the dimension.

\section{Analysis}
In this section, we provide proofs of the theoretical results.
\subsection{Proof of Theorem \ref{thm:001}}
We follow standard techniques for analyzing the confidence sets \cite{zhang2016online,jun2017scalable}
and start with the following lemma.
\begin{lemma}
    \label{lm:001}
    For any $t \geq 1$ and $i \in[m]$, the inequality
    \begin{equation*}
        \ell_{t, i} (u) - \ell_{t, i} (v) \geq \nabla \ell_{t, i}(v)^\top (u - v) +  \frac{\kappa}{2}
         (u^\top x_t - v^\top x_t)^2
    \end{equation*}
    holds for any $u, v \in \mathcal{B}_{D}$.
\end{lemma}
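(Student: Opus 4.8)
The plan is to reduce the claimed inequality to a one-dimensional statement about the convex function $g_i$ and then invoke its $\kappa$-strong convexity on $[-D,D]$. Write $a \coloneqq u^\top x_t$ and $b \coloneqq v^\top x_t$, and recall that $\ell_{t,i}(\theta) = -y_t^i\,\theta^\top x_t + g_i(\theta^\top x_t)$ with $\nabla \ell_{t,i}(v) = \big(-y_t^i + \mu_i(b)\big)\,x_t$. The term $-y_t^i\,\theta^\top x_t$ is linear in $\theta$, so it contributes exactly $-y_t^i(a-b)$ both to $\ell_{t,i}(u) - \ell_{t,i}(v)$ and to $\nabla\ell_{t,i}(v)^\top(u-v)$ and cancels. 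Hence the lemma is equivalent to
\begin{equation*}
    g_i(a) - g_i(b) \geq \mu_i(b)(a-b) + \frac{\kappa}{2}(a-b)^2 .
\end{equation*}

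Next I would check that $a$ and $b$ lie in the interval on which Assumption 3 gives control. By Cauchy--Schwarz together with Assumptions 1 and 2, $|a| = |u^\top x_t| \leq \Vert u\Vert\,\Vert x_t\Vert \leq D$ and likewise $|b|\leq D$, so $a,b\in[-D,D]$ and every point between them also lies in $[-D,D]$. Then I would establish the displayed one-dimensional inequality from $\mu_i = g_i'$ and the lower bound $\mu_i'(z)\geq\kappa$ on $(-D,D)$. Applying the mean value theorem to $\mu_i$ shows $\mu_i(s)\geq\mu_i(b)+\kappa(s-b)$ for $s>b$ and $\mu_i(s)\leq\mu_i(b)+\kappa(s-b)$ for $s<b$, so the integrand below always has the sign that makes the inequality go the right way, and integrating gives
\begin{equation*}
    g_i(a) - g_i(b) = \int_b^a \mu_i(s)\, ds \geq \int_b^a \big(\mu_i(b)+\kappa(s-b)\big)\, ds = \mu_i(b)(a-b) + \frac{\kappa}{2}(a-b)^2 ,
\end{equation*}
which is exactly what is needed. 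Substituting back $\mu_i(b) = g_i'(v^\top x_t)$ and recombining with the cancelled linear term recovers the statement of the lemma.

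I expect the only delicate point to be a boundary technicality: Assumption 3 guarantees differentiability of $\mu_i$ only on the \emph{open} interval $(-D,D)$, so the mean-value / integral argument must be carried out in the interior, and the case $a\in\{-D,D\}$ or $b\in\{-D,D\}$ handled by a limiting argument, using that both sides of the target inequality are continuous in $(a,b)$ on $[-D,D]^2$ (because $\mu_i$ is $L$-Lipschitz, hence continuous, on $[-D,D]$, and $g_i$ is its antiderivative). An equivalent route that avoids invoking differentiability of $\mu_i$ at all is the integral form of the second-order Taylor remainder, $g_i(a) - g_i(b) - \mu_i(b)(a-b) = \int_b^a (a-s)\,\mu_i'(s)\, ds$, followed by the bound $\mu_i'(s)\geq\kappa$; I would present whichever of the two is cleaner. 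Everything beyond this is routine bookkeeping.
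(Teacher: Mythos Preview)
Your proposal is correct and follows essentially the same route as the paper: both reduce to the one-dimensional statement that the scalar function $z \mapsto -y_t^i z + g_i(z)$ (equivalently $g_i$ itself, after your cancellation of the linear term) is $\kappa$-strongly convex on $[-D,D]$ because $g_i'' = \mu_i' \geq \kappa$, and then substitute $z = u^\top x_t$, $z' = v^\top x_t$. Your integral/MVT derivation is just an explicit unpacking of what the paper states as ``$\tilde\ell_{t,i}$ is $\kappa$-strongly convex,'' and your attention to the open-versus-closed interval endpoint issue is a refinement the paper glosses over.
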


\begin{proof}
    Define $\tilde \ell_{t, i}(z) = -y_t^i z + g_i(z), z \in [-D, D]$. By Assumption 3 we have
    \begin{equation*}
        \tilde \ell''_{t, i}(z) = \mu_i'(z) \geq \kappa > 0, \; \forall z \in (-D, D)
    \end{equation*}
    which implies $\tilde \ell_{t, i}(z)$ is $\kappa$-strongly convex on $[-D, D]$. Thus, we have $\forall z, z' \in [-D, D]$,
    \begin{equation}
        \label{eq:lm:002}
            \tilde \ell_{t, i} (z) - \tilde \ell_{t, i} (z') \geq \tilde \ell'_{t, i}(z') (z - z') +
            \frac{\kappa}{2}(z - z')^2 .
    \end{equation} 
    Note that
    \begin{equation*}
        \begin{split}
            \ell_{t, i} (u) = \tilde  \ell_{t, i} (u^\top x_t), \quad \ell_{t, i} (v) = \tilde  \ell_{t, i} (v^\top x_t), \\
            \nabla \ell_{t, i}(v)^\top (u - v) = \tilde  \ell'_{t, i} (v^\top x_t) (u-v)^\top x_t .
        \end{split}
    \end{equation*}
    We complete the proof by substituting $z = u^\top x_t$ and $z' = v^\top x_t$ into (\ref{eq:lm:002}).
\end{proof}

Let $f_{t, i}(\theta) = \mathbb{E}_{y_t^i} [\ell_{t, i}(\theta)]$ be the conditional expectation over $y_t^i$.
The following lemma shows that $\theta_i$ is the minimum point of $f_{t, i}(\theta)$ on the bounded domain.
\begin{lemma}
    \label{lm:002}
    For any $t \geq 1$ and $i \in[m]$, we have
    $$
        f_{t, i}(\theta) - f_{t, i}(\theta_i) \geq 0, \; \forall \theta \in \mathcal{B}_{D} .
    $$
\end{lemma}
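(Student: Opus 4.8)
The plan is to derive the claim as an immediate consequence of Lemma~\ref{lm:001} by taking a conditional expectation over $y_t^i$. Fix $t \geq 1$ and $i \in [m]$, and apply Lemma~\ref{lm:001} with $u = \theta$ an arbitrary point of $\mathcal{B}_D$ and $v = \theta_i$; this is legitimate because $\theta_i \in \mathcal{B}_D$ by Assumption~1. It gives
\begin{equation*}
    \ell_{t, i}(\theta) - \ell_{t, i}(\theta_i) \geq \nabla \ell_{t, i}(\theta_i)^\top (\theta - \theta_i) + \frac{\kappa}{2}\left(\theta^\top x_t - \theta_i^\top x_t\right)^2 .
\end{equation*}

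Next I would take the conditional expectation over $y_t^i$ on both sides. Since $x_t$ is determined before $y_t^i$ is drawn, the term $\frac{\kappa}{2}(\theta^\top x_t - \theta_i^\top x_t)^2$ and the points $\theta, \theta_i$ are all deterministic, and the only stochastic quantity is the gradient $\nabla \ell_{t, i}(\theta_i) = -y_t^i x_t + \mu_i(\theta_i^\top x_t) x_t$. Its conditional expectation is $\bigl(-\mathbb{E}[y_t^i \mid x_t] + \mu_i(\theta_i^\top x_t)\bigr) x_t = \mathbf{0}$, because $\mathbb{E}[y_t^i \mid x_t] = \mu_i(\theta_i^\top x_t)$ under the GLM. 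Hence the linear term drops out and, recalling $f_{t,i}(\cdot) = \mathbb{E}_{y_t^i}[\ell_{t,i}(\cdot)]$, we are left with
\begin{equation*}
    f_{t, i}(\theta) - f_{t, i}(\theta_i) \geq \frac{\kappa}{2}\left(\theta^\top x_t - \theta_i^\top x_t\right)^2 \geq 0 ,
\end{equation*}
which is exactly the desired inequality (and in fact a slightly stronger, quantitative version).

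The argument is essentially routine once Lemma~\ref{lm:001} is available; the only point that needs a little care is the identity $\mathbb{E}_{y_t^i}[\nabla \ell_{t, i}(\theta_i)] = \mathbf{0}$, i.e. that $\theta_i$ is a stationary point of the population surrogate loss, which is precisely where the GLM realizability assumption is used. An alternative, equally short route is to observe that $f_{t,i}(\theta) = F(\theta^\top x_t)$ with $F(z) = -\mu_i(\theta_i^\top x_t)z + g_i(z)$ convex, $F'(z) = \mu_i(z) - \mu_i(\theta_i^\top x_t)$, and $\mu_i$ strictly increasing, so $F$ attains its minimum at $z = \theta_i^\top x_t$; since $|\theta^\top x_t| \leq D$ for all $\theta \in \mathcal{B}_D$ by Assumptions~1--2, this minimizer is always attainable. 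No genuine obstacle is anticipated.
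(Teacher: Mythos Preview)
Your proof is correct. The primary route you take---applying Lemma~\ref{lm:001} and then taking the conditional expectation so that the linear term vanishes---is a slight variant of the paper's argument: the paper instead computes the expectation first, obtaining
\[
f_{t,i}(\theta)-f_{t,i}(\theta_i)=g_i(\theta^\top x_t)-g_i(\theta_i^\top x_t)+\mu_i(\theta_i^\top x_t)(\theta_i^\top x_t-\theta^\top x_t),
\]
and then invokes plain convexity of $g_i$ (together with $g_i'=\mu_i$) to conclude nonnegativity. This is precisely your ``alternative route'' with $F(z)=-\mu_i(\theta_i^\top x_t)z+g_i(z)$. The difference is cosmetic: you rely on the strong-convexity Lemma~\ref{lm:001} and get the sharper bound $f_{t,i}(\theta)-f_{t,i}(\theta_i)\geq \tfrac{\kappa}{2}(\theta^\top x_t-\theta_i^\top x_t)^2$, whereas the paper's direct computation uses only convexity and is self-contained (it does not invoke Lemma~\ref{lm:001}). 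Both hinge on the same GLM identity $\mathbb{E}[y_t^i\mid x_t]=\mu_i(\theta_i^\top x_t)$, which is what makes $\theta_i$ a stationary point of the population loss.
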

\begin{proof}
    Fix $t$ and $i$. For any $\theta \in \mathcal{B}_{D}$, we have
    \begin{align*}
        & \, f_{t, i}(\theta) - f_{t, i}(\theta_i) \\
         = & \, \mathbb{E}_{y_t^i} [\ell_{t, i}(\theta) - \ell_{t, i}(\theta_i)] \\
         = & \, g_i(\theta^\top x_t) - g_i(\theta_i ^\top x_t) + \mu_i(\theta_i ^\top x_t) 
         (\theta_i ^\top x_t - \theta ^\top x_t) \\
         \geq & \, g_i'(\theta_i ^\top x_t) (\theta ^\top x_t - \theta_i ^\top x_t) + \mu_i(\theta_i ^\top x_t) 
         (\theta_i ^\top x_t - \theta ^\top x_t) \\
        = & \, 0 .
    \end{align*}
    where the second and the last equalities are due to the properties of GLM, and the inequality holds since $g_i$ is a convex function.
\end{proof}

To exploit the property of the estimation method in (\ref{eq:ons}), we introduce the following lemma from 
\citeauthor{zhang2016online} \shortcite{zhang2016online}.
\begin{lemma}
    \label{lm:003}
    For any $t \geq 1$ and $i \in[m]$,
    \begin{equation}
        \label{eq:lm:003}
        \begin{split}
            & \nabla \ell_{t, i}(\hat \theta_{t, i})^\top (\hat \theta_{t, i} - \theta_i) - \frac{1}{2} 
            {\Vert \nabla \ell_{t, i} (\hat \theta_{t, i}) \Vert}_{Z_{t+1}^{-1}}^2 \\
            \leq \, & \frac{1}{2}\left({\Vert \hat \theta_{t, i} - \theta_i \Vert}_{Z_{t+1}}^2 - \; {\Vert \hat \theta_{t+1, i} - \theta_i \Vert}_{Z_{t+1}}^2\right).
        \end{split}
    \end{equation}
\end{lemma}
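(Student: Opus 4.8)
The plan is to derive (\ref{eq:lm:003}) from the variational characterization of the update in (\ref{eq:ons}), i.e.\ from the non-expansiveness of the generalized projection in the metric induced by $Z_{t+1}$. Write $g_t \coloneqq \nabla \ell_{t,i}(\hat \theta_{t,i})$ and let $\tilde\theta \coloneqq \hat\theta_{t,i} - Z_{t+1}^{-1} g_t$ be the unprojected iterate, so that the closed form in (\ref{eq:ons}) becomes $\hat\theta_{t+1,i} = \Pi_{\mathcal{B}_D}^{Z_{t+1}}[\tilde\theta]$. Since Assumption 1 gives $\Vert \theta_i \Vert \le D$, the comparator $\theta_i$ is feasible for this projection, and the generalized Pythagorean property of the projection onto a convex set yields $\Vert \hat\theta_{t+1,i} - \theta_i \Vert_{Z_{t+1}}^2 \le \Vert \tilde\theta - \theta_i \Vert_{Z_{t+1}}^2$ (in fact with a further non-negative term $\Vert \hat\theta_{t+1,i} - \tilde\theta \Vert_{Z_{t+1}}^2$ on the right-hand side that we simply drop).

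The remaining step is a one-line expansion. Writing $\tilde\theta - \theta_i = (\hat\theta_{t,i} - \theta_i) - Z_{t+1}^{-1} g_t$ and using $(Z_{t+1}^{-1}g_t)^\top Z_{t+1}(Z_{t+1}^{-1}g_t) = \Vert g_t \Vert_{Z_{t+1}^{-1}}^2$ together with $(\hat\theta_{t,i} - \theta_i)^\top Z_{t+1}(Z_{t+1}^{-1}g_t) = g_t^\top(\hat\theta_{t,i} - \theta_i)$, one gets
\begin{equation*}
\Vert \tilde\theta - \theta_i \Vert_{Z_{t+1}}^2 = \Vert \hat\theta_{t,i} - \theta_i \Vert_{Z_{t+1}}^2 - 2\, g_t^\top(\hat\theta_{t,i} - \theta_i) + \Vert g_t \Vert_{Z_{t+1}^{-1}}^2 .
\end{equation*}
Plugging this into the projection inequality and rearranging gives exactly (\ref{eq:lm:003}).

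If one prefers to avoid quoting a projection lemma, an equivalent route argues directly from the constrained minimization defining $\hat\theta_{t+1,i}$: its first-order optimality condition states $\big(Z_{t+1}(\hat\theta_{t+1,i} - \hat\theta_{t,i}) + g_t\big)^\top(\theta - \hat\theta_{t+1,i}) \ge 0$ for all $\theta \in \mathcal{B}_D$; setting $\theta = \theta_i$, decomposing $g_t^\top(\hat\theta_{t,i} - \theta_i)$ through $\hat\theta_{t+1,i}$, bounding the cross term $g_t^\top(\hat\theta_{t,i} - \hat\theta_{t+1,i})$ by Fenchel--Young as $\tfrac12\Vert g_t \Vert_{Z_{t+1}^{-1}}^2 + \tfrac12\Vert \hat\theta_{t,i} - \hat\theta_{t+1,i} \Vert_{Z_{t+1}}^2$, and invoking the cosine identity $2(\hat\theta_{t,i} - \hat\theta_{t+1,i})^\top Z_{t+1}(\theta_i - \hat\theta_{t+1,i}) = \Vert \hat\theta_{t,i} - \hat\theta_{t+1,i} \Vert_{Z_{t+1}}^2 + \Vert \theta_i - \hat\theta_{t+1,i} \Vert_{Z_{t+1}}^2 - \Vert \hat\theta_{t,i} - \theta_i \Vert_{Z_{t+1}}^2$ collapses all cross terms and reproduces the claimed inequality.

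This is the standard three-point / mirror-descent identity specialized to the time-varying quadratic regularizer $\tfrac12\Vert \cdot \Vert_{Z_{t+1}}^2$, so I do not anticipate a genuine obstacle; the one thing that must not be overlooked is feasibility of the comparator $\theta_i$ (supplied by Assumption 1), and the only care needed is bookkeeping — keeping the $Z_{t+1}$-weighted inner products straight and correctly switching between the $\Vert\cdot\Vert_{Z_{t+1}}$ and $\Vert\cdot\Vert_{Z_{t+1}^{-1}}$ norms when applying Cauchy--Schwarz / Fenchel--Young.
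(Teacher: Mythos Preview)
Your argument is correct: the generalized Pythagorean inequality for the $Z_{t+1}$-projection onto $\mathcal{B}_D$ (legitimate because $\theta_i \in \mathcal{B}_D$ by Assumption~1), followed by the quadratic expansion of $\Vert \tilde\theta - \theta_i \Vert_{Z_{t+1}}^2$, yields (\ref{eq:lm:003}) exactly as you wrote. The paper itself does not prove this lemma at all --- it simply imports it from \citeauthor{zhang2016online} \shortcite{zhang2016online} --- and your derivation is precisely the standard one underlying that reference, so there is nothing to contrast.
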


To proceed, we bound the norm of $\nabla \ell_{t, i} (\hat \theta_{t, i})$ as follows.
\begin{equation}
    \label{eq:upp-norm}
    \begin{split}
        {\Vert \nabla \ell_{t, i} (\hat \theta_{t, i}) \Vert}_{Z_{t+1}^{-1}}^2 & = {\Vert -y_t^i x_t + 
        \mu_i(\hat \theta_{t,i}^\top x_t) x_t \Vert}_{Z_{t+1}^{-1}}^2 \\
        & \leq (R + U)^2 {\Vert x_t \Vert}_{Z_{t+1}^{-1}}^2
    \end{split}
\end{equation}
where the inequality holds because of Assumptions 3 and 4. We are now ready to prove Theorem \ref{thm:001}. By Lemma \ref{lm:001}, we have
\begin{equation*}
        \ell_{t, i} (\hat \theta_{t, i}) - \ell_{t, i} (\theta_i) \leq \nabla \ell_{t, i}(\hat \theta_{t, i})^\top (\hat \theta_{t, i} - \theta_i) -  \frac{\kappa}{2}(\theta_i^\top x_t - \hat \theta_{t, i}^\top x_t )^2 .
\end{equation*}
Taking expectation in both side and using Lemma 2, we obtain
\begin{equation*}
    \begin{split}
        0 \leq & \; f_{t, i} (\hat \theta_{t, i}) - f_{t, i} (\theta_i) \\
          \leq & \; \nabla f_{t, i}(\hat \theta_{t, i})^\top (\hat \theta_{t, i} - \theta_i)
          -  \frac{\kappa}{2}(\theta_i^\top x_t - \hat \theta_{t, i}^\top x_t )^2 \\
          = & \; \underbrace{ (\nabla f_{t, i}(\hat \theta_{t, i}) - \nabla \ell_{t, i}(\hat \theta_{t, i}))^\top (\hat \theta_{t, i} - \theta_i)}_{a_{t,i}} \\
           & \; - \frac{\kappa}{2}\underbrace{(\theta_i^\top x_t - \hat \theta_{t, i}^\top x_t )^2}_{b_{t, i}}
          + \nabla \ell_{t, i}(\hat \theta_{t, i})^\top (\hat \theta_{t, i} - \theta_i) \\
          \overset{(\ref{eq:lm:003}, \ref{eq:upp-norm})}{\leq}& \;  a_{t, i} - \frac{\kappa}{2} b_{t, i} + \frac{(R + U)^2}{2} {\Vert x_t \Vert}_{Z_{t+1}^{-1}}^2 \\
          & \; + \frac{1}{2}\left({\Vert \hat \theta_{t, i} - \theta_i \Vert}_{Z_{t+1}}^2 - \; {\Vert \hat \theta_{t+1, i} - \theta_i \Vert}_{Z_{t+1}}^2\right) \\
          = & \;  a_{t, i} - \frac{\kappa}{4} b_{t, i} + \frac{(R + U)^2}{2} {\Vert x_t \Vert}_{Z_{t+1}^{-1}}^2 \\
          & \; + \frac{1}{2}\left({\Vert \hat \theta_{t, i} - \theta_i \Vert}_{Z_{t}}^2 - \; {\Vert \hat \theta_{t+1, i} - \theta_i \Vert}_{Z_{t+1}}^2\right)
    \end{split}
\end{equation*}
where the last equality is due to $Z_{t+1} = Z_t + \frac{\kappa}{2} x_t x_t^\top$. 
Summing the above inequality over $1$ to $t$ and rearranging, we have
\begin{equation}
    \label{eq:thm1:main}
    \begin{split}
        {\Vert \hat \theta_{t+1, i} - \theta_i \Vert}_{Z_{t+1}}^2 \leq \; & \lambda D^2 + 2 \sum_{s=1}^t a_{s, i} - \frac{\kappa}{2} \sum_{s=1}^t b_{s, i} \\
        & + (R+U)^2 \sum_{s=1}^t {\Vert x_s \Vert}_{Z_{s+1}^{-1}}^2 .
    \end{split}
\end{equation}
We propose the following lemma to bound $\sum_{s=1}^t a_{s,i}$.
\begin{lemma}
    \label{lm:whp}
    With probability at least $1 - \delta$, for any $i \in [m]$ and $t \geq 1$,
    \begin{equation}
        \label{eq:thm1:whp}
        \begin{split}
            \sum_{s=1}^t a_{s, i} \leq \; & \frac{\kappa}{4} \sum_{s=1}^t b_{s,i} + \frac{\kappa}{4} \\
            & + \frac{8(R+U)^2}{\kappa} \log{\left(\frac{m}{\delta} 
            \sqrt{1 + 4D^2t} \right)} .
        \end{split}
    \end{equation}
\end{lemma}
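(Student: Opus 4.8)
The plan is to recognize Lemma~\ref{lm:whp} as a scalar self-normalized martingale bound in disguise. First I would compute $a_{s,i}$ explicitly. Since $f_{s,i}(\theta)=\mathbb{E}_{y_s^i}[\ell_{s,i}(\theta)]$ and $\mathbb{E}[y_s^i\mid x_s]=\mu_i(\theta_i^\top x_s)$, we have $\nabla f_{s,i}(\hat\theta_{s,i})-\nabla\ell_{s,i}(\hat\theta_{s,i})=\big(y_s^i-\mu_i(\theta_i^\top x_s)\big)x_s$, so that
\[
    a_{s,i}=\eta_s^i\,w_{s,i},\qquad \eta_s^i\coloneqq y_s^i-\mu_i(\theta_i^\top x_s),\qquad w_{s,i}\coloneqq x_s^\top(\hat\theta_{s,i}-\theta_i),
\]
and $b_{s,i}=w_{s,i}^2$. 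Hence (\ref{eq:thm1:whp}) is exactly the statement $\sum_{s=1}^t\eta_s^i w_{s,i}\le\frac{\kappa}{4}\sum_{s=1}^t w_{s,i}^2+\frac{\kappa}{4}+\frac{8(R+U)^2}{\kappa}\log\!\big(\frac{m}{\delta}\sqrt{1+4D^2t}\big)$, i.e., a self-normalized inequality with conditionally mean-zero ``noise'' $\eta_s^i$ and predictable ``regressor'' $w_{s,i}$.

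Second, I would fix the filtration: let $\mathcal{F}_{s-1}$ be generated by $x_1,y_1,\dots,x_{s-1},y_{s-1}$ together with the internal randomness used to draw $x_s$ uniformly from $\mathcal{O}_s$. Then $\hat\theta_{s,i}$ and $x_s$ are $\mathcal{F}_{s-1}$-measurable, so $w_{s,i}$ is predictable; by the GLM (and the fact that the reward law given $x_s$ does not depend on the arm-selection randomness) $\mathbb{E}[\eta_s^i\mid\mathcal{F}_{s-1}]=0$; and by Assumptions~3--4, $|\eta_s^i|\le|y_s^i|+|\mu_i(\theta_i^\top x_s)|\le R+U$ almost surely, so $\eta_s^i$ is conditionally $2(R+U)$-sub-Gaussian. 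Moreover, since the update (\ref{eq:ons}) projects onto $\mathcal{B}_D$ we have $\hat\theta_{s,i},\theta_i\in\mathcal{B}_D$, which together with $\|x_s\|\le1$ gives $|w_{s,i}|\le2D$ and therefore $\sum_{s=1}^t w_{s,i}^2\le 4D^2t$.

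Third, I would apply the standard method-of-mixtures self-normalized bound. For a fixed $i$ and a Gaussian mixing prior on a scalar $\rho$, the process $M_t=\int\exp\!\big(\rho\sum_{s\le t}\eta_s^i w_{s,i}-2\rho^2(R+U)^2\sum_{s\le t}w_{s,i}^2\big)\,d\rho$ is a nonnegative supermartingale with mean $1$ (using the conditional sub-Gaussianity and predictability above); carrying out the Gaussian integral and invoking Ville's maximal inequality at level $\delta/m$ gives, with probability at least $1-\delta/m$, simultaneously for all $t\ge1$,
\[
    \sum_{s=1}^t \eta_s^i w_{s,i}\;\le\;\sqrt{\,8(R+U)^2\Big(1+\sum_{s=1}^t w_{s,i}^2\Big)\log\!\Big(\tfrac{m}{\delta}\sqrt{1+\sum_{s=1}^t w_{s,i}^2}\Big)}\;\le\;\sqrt{\,8(R+U)^2\Big(1+\sum_{s=1}^t b_{s,i}\Big)\log\!\Big(\tfrac{m}{\delta}\sqrt{1+4D^2t}\Big)},
\]
where the last step uses $\sum_{s\le t}w_{s,i}^2\le 4D^2t$ (the prior variance is chosen so that the ``$+1$'' appears). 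Applying $\sqrt{uv}\le\frac{\kappa}{4}u+\frac{1}{\kappa}v$ with $u=1+\sum_{s\le t}b_{s,i}$ and $v=8(R+U)^2\log(\cdot)$ turns the right-hand side into $\frac{\kappa}{4}+\frac{\kappa}{4}\sum_{s=1}^t b_{s,i}+\frac{8(R+U)^2}{\kappa}\log\!\big(\frac{m}{\delta}\sqrt{1+4D^2t}\big)$, which is (\ref{eq:thm1:whp}). A union bound over $i\in[m]$ (already absorbed into the $m/\delta$ inside the logarithm) yields the statement for all $i\in[m]$ and $t\ge1$ with probability at least $1-\delta$.

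I expect the self-normalized concentration step to be the main obstacle. One must (i) be careful that the arm-selection randomness is included in $\mathcal{F}_{s-1}$, so that $w_{s,i}$ is predictable while $\eta_s^i$ stays conditionally mean-zero and sub-Gaussian, and (ii) obtain a bound uniform over all $t$ --- which is exactly what the supermartingale/Ville's-inequality argument delivers without an extra union bound over the horizon. The remaining steps (the identification of $a_{s,i}$, the $|w_{s,i}|\le2D$ estimate, and the AM--GM split) are routine.
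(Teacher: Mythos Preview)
Your proposal is correct and follows essentially the same route as the paper: identify $a_{s,i}=\eta_{s,i}\,x_s^\top(\hat\theta_{s,i}-\theta_i)$ with $\eta_{s,i}=y_s^i-\mu_i(\theta_i^\top x_s)$ a bounded martingale difference, apply the scalar self-normalized martingale bound (the paper cites Corollary~8 of Abbasi-Yadkori, which is exactly your method-of-mixtures/Ville argument), bound $\sum b_{s,i}\le 4D^2t$ inside the logarithm, and finish with an AM--GM split. Your treatment of the filtration and predictability is more careful than the paper's, and your sub-Gaussian constant $2(R+U)$ is slightly looser than the paper's $(R+U)$ (Hoeffding's lemma gives the latter for a mean-zero variable in $[-(R+U),R+U]$), but this is harmless since it still yields the stated constants after the AM--GM step.
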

\begin{proof}
Let $\eta_{s, i} = y_s^i - \mu_i(\theta_{i}^\top x_s)$, which is a martingale difference sequence. 
By Assumptions 3 and 4, $\vert \eta_{s, i} \vert \leq (R+U)$ holds almost surely, which implies
 that $\eta_{s, i}$ is $(R+U)$-sub-Gaussian. Thus, we can apply the self-normalized bound for martingales (\citeauthor{abbasi2012online}
 \citeyear{abbasi2012online}, Corollary 8) and use the union bound
  to obtain that
 with probability at least $1 - \delta$,
 \begin{equation*}
    \begin{split}
        \sum_{s=1}^t a_{s, i} \leq \, & (R+U) \cdot \sqrt{\left( 2+2 \sum_{s=1}^t b_{s,i} \right)} \\
        & \cdot \sqrt{\log{\left(\frac{m}{\delta} \sqrt{1 + \sum_{s=1}^t b_{s,i}} \right)}},\; \forall i \in [m], \forall t \geq 1 .
    \end{split}
\end{equation*}
We conclude the proof by noticing that $b_{s,i} \leq 4D^2$ and using the well-known inequality
$\sqrt{pq} \leq \frac{p}{c} + cq$, where we pick $c = \frac{8(R+U)}{\kappa}$.
\end{proof}

It remains to bound $\sum_{s=1}^t {\Vert x_s \Vert}_{Z_{s+1}^{-1}}^2$. To this end, we employ Lemma 12 in
 \citeauthor{hazan2007logarithmic} \shortcite{hazan2007logarithmic} and
obtain
\begin{equation}
    \label{eq:thm1:hazan}
    \begin{split}
        \sum_{s=1}^t {\Vert x_s \Vert}_{Z_{s+1}^{-1}}^2 & \leq \frac{2}{\kappa} \sum_{s=1}^t \log{\frac{\det{(Z_{s+1})}}{\det{(Z_s)}}} \\
                                                        & \leq \frac{2}{\kappa} \log{\frac{\det{(Z_{t+1})}}{\det{(Z_1)}}} .
    \end{split}
\end{equation}
We complete the proof by combining (\ref{eq:thm1:main}), (\ref{eq:thm1:whp}), and (\ref{eq:thm1:hazan}). \hfill $\square$\par
\subsection{Proof of Theorem \ref{thm:002}.}
By Theorem \ref{thm:001},
\begin{equation}
    \label{eq:thm2:whp}
    \theta_i \in \mathcal{C}_{t, i}, \forall i \in [m], \forall t \geq 1
\end{equation}
holds with probability at least $1 - \delta$. For each objective $i \in [m]$ and each round $t \geq 1$, we define
\begin{equation}
    \label{eq:thm2:tilde}
    \tilde \theta_{t, i} \coloneqq \mathop{\arg \max}_{\theta \in \mathcal{C}_{t, i}} \theta^\top x_t .
\end{equation}
Recall that $x_t$ is selected from $\mathcal{O}_t$, which implies that for any $x \in \mathcal{X}$, there exists an objective $j \in [m]$ such that
\begin{equation}
    \label{eq:th2:002}
    \hat \mu_{t, x_t}^j \geq \hat \mu_{t, x}^j .
\end{equation}
By definitions in (\ref{eq:prac-upp}) and (\ref{eq:thm2:tilde}), we have
\begin{equation}
    \label{eq:th2:003}
    \hat \mu_{t, x_t}^j = \tilde \theta_{t, j}^\top x_t, \quad \hat \mu_{t, x}^j = \max_{\theta \in \mathcal{C}_{t, j}} \theta^\top x
    \overset{(\ref{eq:thm2:whp})}{\geq} \theta_j^\top x .
\end{equation}
Combining (\ref{eq:th2:002}) and (\ref{eq:th2:003}), we obtain
\begin{equation}
    \label{eq:thm2:order}
    \tilde \theta_{t, j}^\top x_t \geq \theta_j^\top x .
\end{equation}
In the following, we consider two different scenarios, i.e., $ \theta_j^\top x \leq \theta_j^\top x_t $ and 
$ \theta_j^\top x > \theta_j^\top x_t $. For the former case, it is easy to show
 $$\mu_j(\theta_j^\top x) - \mu_j(\theta_j^\top x_t) \leq 0$$ since $\mu_j$ is monotonically increasing.
For the latter case, we have
\begin{equation*}
    \begin{split}
        & \mu_j(\theta_j^\top x) - \mu_j(\theta_j^\top x_t) \\
        \leq \; & L (\theta_j^\top x - \theta_j^\top x_t)
        \overset{(\ref{eq:thm2:order})}{\leq} L (\tilde \theta_{t, j}^\top x_t - \theta_j^\top x_t)  \\
        = \; & L  (\tilde \theta_{t, j} - \hat \theta_{t, j})^\top x_t + L
        (\hat \theta_{t, j} - \theta_j)^\top x_t  \\
        \leq \; & L({\Vert \tilde \theta_{t, j} - \hat \theta_{t, j} \Vert}_{Z_t} + {\Vert \hat \theta_{t, j} - \theta_j \Vert}_{Z_t})
         {\Vert x_t \Vert}_{Z_t^{-1}} \\
        \overset{(\ref{eq:thm1})}{\leq} & 2L\sqrt{\gamma_{t}} {\Vert x_t \Vert}_{Z_t^{-1}} 
        \leq 2L\sqrt{\gamma_{T+1}} {\Vert x_t \Vert}_{Z_t^{-1}}
    \end{split}
\end{equation*}
where the first inequality is due to the Lipschitz continuity of $\mu_j$, the third inequality follows from 
the H\"{o}lder's inequality, and the last inequality holds since $\gamma_{t}$ is monotonically increasing with $t$.
In summary, we have
\begin{equation*}
    \mu_j(\theta_j^\top x) - \mu_j(\theta_j^\top x_t) \leq 2 L\sqrt{\gamma_{T+1}} {\Vert x_t \Vert}_{Z_t^{-1}} .
\end{equation*}
Since the above inequality holds for any $x \in \mathcal{X}$, we have
$\Delta x_t \leq 2L\sqrt{\gamma_{T+1}} {\Vert x_t \Vert}_{Z_t^{-1}}$, which immediately implies
\begin{equation}
        \label{eq:thm2:delta}
        PR(T) = \sum_{t=1}^T \Delta x_t \leq 2L \sqrt{\gamma_{T+1}} \sum_{t=1}^T  {\Vert x_t \Vert}_{Z_t^{-1}} .
\end{equation}
We bound the RHS by the Cauchy--Schwarz inequality:
\begin{equation}
    \label{eq:thm2:cs}
    \sum_{t=1}^T  {\Vert x_t \Vert}_{Z_t^{-1}} \leq \sqrt{T \sum_{t=1}^T  {\Vert x_t \Vert}_{Z_t^{-1}}^2} .
\end{equation}
By Lemma 11 in \citeauthor{abbasi2011improved} \shortcite{abbasi2011improved}, we have
\begin{equation}
    \label{eq:lm11}
    \sum_{t=1}^T  {\Vert x_t \Vert}_{Z_t^{-1}}^2 \leq \frac{4}{\kappa} \log{\frac{\det{(Z_{T+1})}}{\det{(Z_1)}}} .
\end{equation}
Combining (\ref{eq:thm2:delta})-(\ref{eq:lm11}) and Corollary \ref{cor:1} finishes the proof. \hfill $\square$\par

\begin{figure*}[ht]

    \begin{subfigure}{0.32\textwidth}
          \centering
          \includegraphics[height=4.3cm,width=5.9125cm]{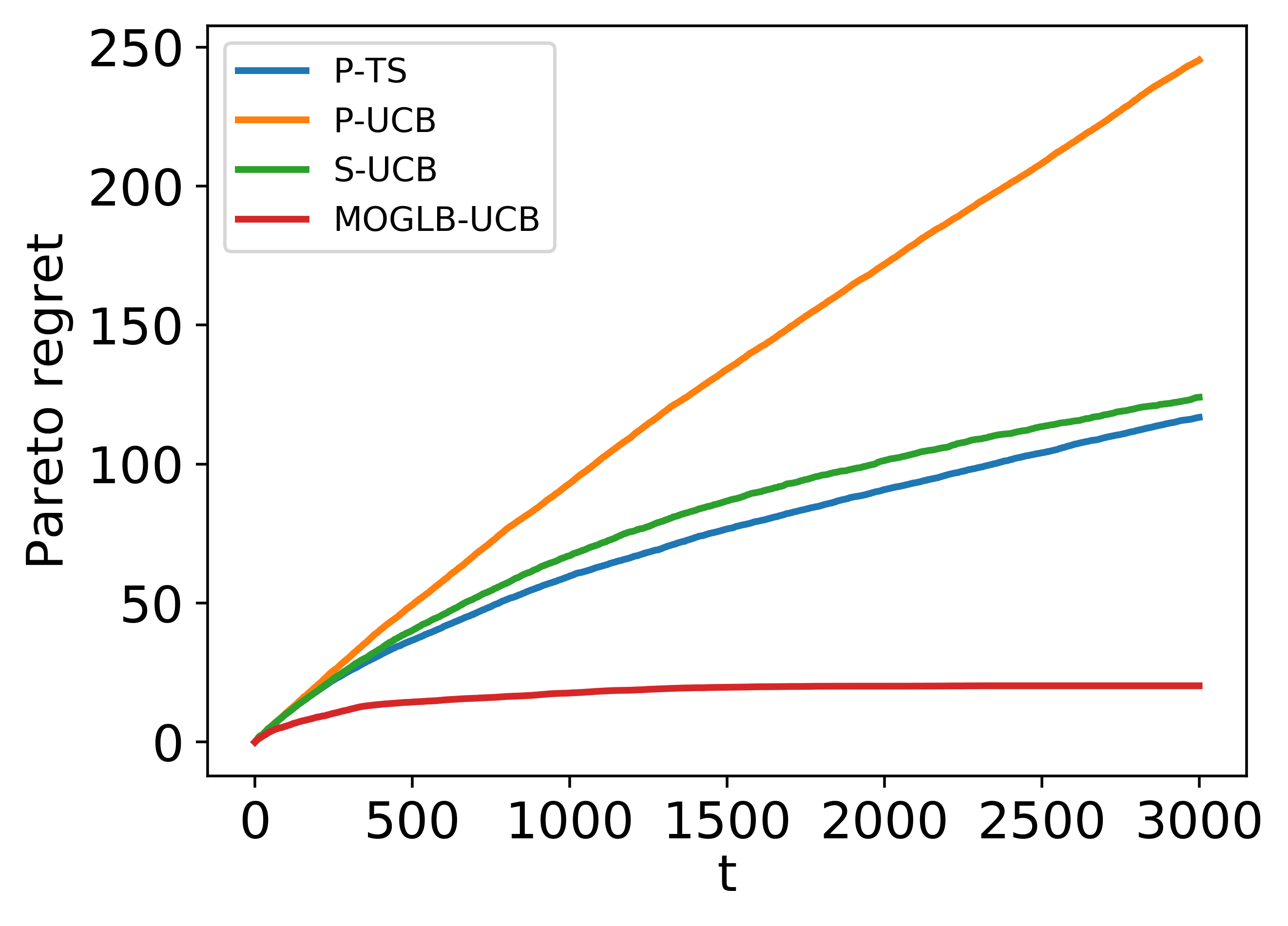}
        \caption{$d=5$}
    \end{subfigure}
    \hfill
    \begin{subfigure}{0.32\textwidth}
        \centering  
        \includegraphics[height=4.3cm,width=5.9125cm]{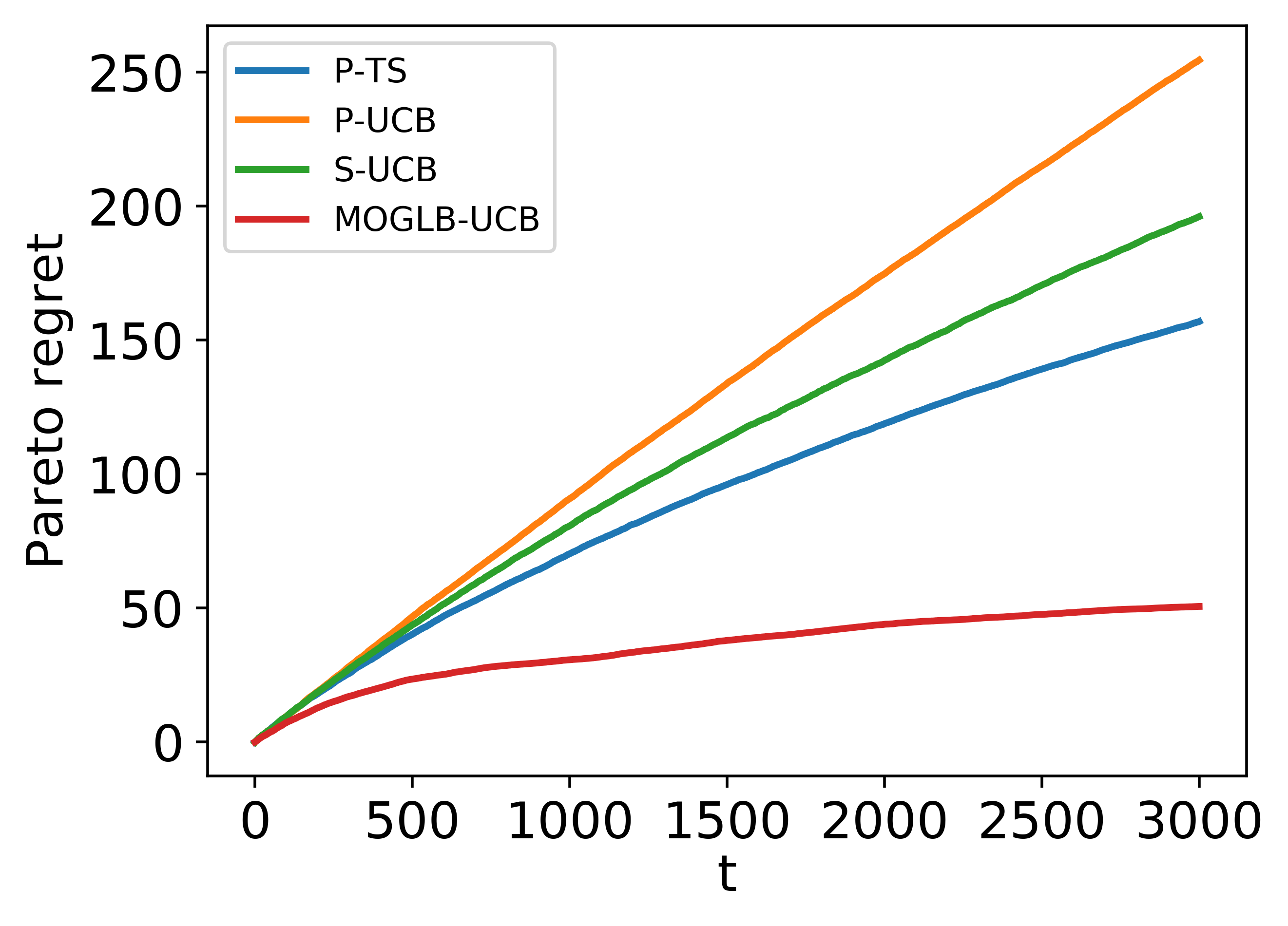}
        \caption{$d=10$}
    \end{subfigure}
    \hfill
    \begin{subfigure}{0.32\textwidth}
        \centering
        \includegraphics[height=4.3cm,width=5.9125cm]{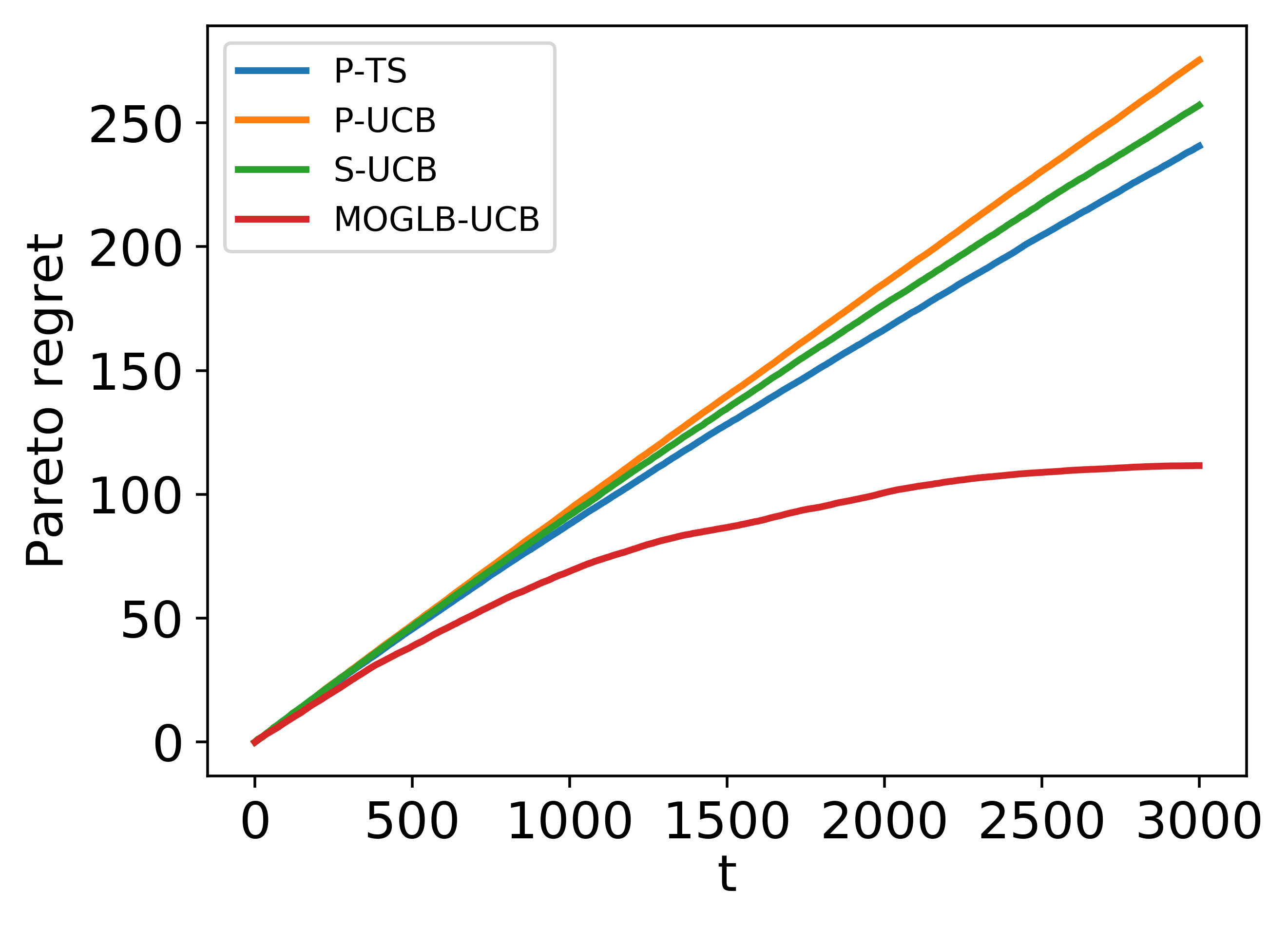}
      \caption{$d=15$}
    \end{subfigure}
    \caption{Pareto regret of different methods}
    \label{fig:exp1-pr}
\end{figure*}

\section{Experiments}
In this section, we conduct numerical experiments to compare our algorithm with the following multi-objective bandits algorithms.
\begin{itemize}
    \item P-UCB \cite{drugan2013designing}: This is the Pareto UCB algorithm, which compares different arms by the
    upper confidence bounds of their expected reward vectors and pulls an arm uniformly from the approximate Pareto front.
    \item S-UCB \cite{drugan2013designing}: This is the scalarized UCB algorithm, which scalarizes the reward vector
    by assigning weights to each objective and then employs the single objective UCB algorithm \cite{auer2002finite}. Throughout the experiments, we assign each objective with equal weight.
    \item P-TS \cite{yahyaa2015thompson}: This is the Pareto Thompson sampling algorithm, which makes use of the Thompson
     sampling technique to estimate the expected reward for every arm and selects an arm uniformly at random
    from the estimated Pareto front.
\end{itemize}
Note that the Pareto contextual zooming algorithm proposed by \citeauthor{turgay2018multi} \shortcite{turgay2018multi} is not included in the experiments, because one step of this algorithm is finding relevant balls whose specific implementation is lacked in their paper and no experimental results of their algorithm are reported as well.

In our algorithm, there is a parameter $\lambda$.
Since its functionality is just to make $Z_t$ invertible and our algorithm is insensitive to it, we simply
set $\lambda = \max(1, \kappa / 2)$.
Following common practice in bandits learning \cite{zhang2016online,jun2017scalable}, we also tune the width of the confidence set  $\gamma_t$ as $c\log{\frac{\det{(Z_{t})}}{\det{(Z_1)}}}$, where $c$ is searched within $[1e-3, 1]$.
We use a synthetic dataset constructed as follows.
Let $m=5$ and pick $d$ from $\{5, 10, 15\}$. 
For each objective $i \in [m]$, we sample the coefficients $\theta_i$ uniformly from the positive part of the unit ball.
To control the size of the Pareto front, we generate the arm set comprised of $4d$ arms as follows.
We first draw $3d$ arms uniformly from the centered ball whose radius is $0.5$, and then 
sample $d$ arms uniformly from the centered unit ball. We repeat this process until the size of the Pareto front is not more than $d$.

In each round $t=1,2,\ldots,T$, after the learner submits an arm $x_t$, he observes an $m$-dimensional reward vector, each 
component of which is generated according to the generalized linear model. 
While the GLM family contains various statistical models, in the experiments we choose two frequently used models namely 
the probit model and the logit model.
Specifically, the first two components of the reward vector are generated by the probit model, and
 the last three components are generated by the logit model. 

Since both the problem and the algorithms involve randomness, we perform $10$ trials up to round $T = 3000$ and report average performance
of the algorithms. As can be seen from Fig.~\ref{fig:exp1-pr}, where the vertical axis represents
the cumulative Pareto regret up to round $t$, our algorithm significantly outperforms its competitors
in all experiments. This is expected since all these algorithms are designed for multi-armed bandits problem and hence 
do not utilize the particular structure of the problem considered in this paper, which is explicitly exploited by our algorithm.

\begin{figure}[ht]
   \centering
   \includegraphics[height=6.2cm,width=8.525cm]{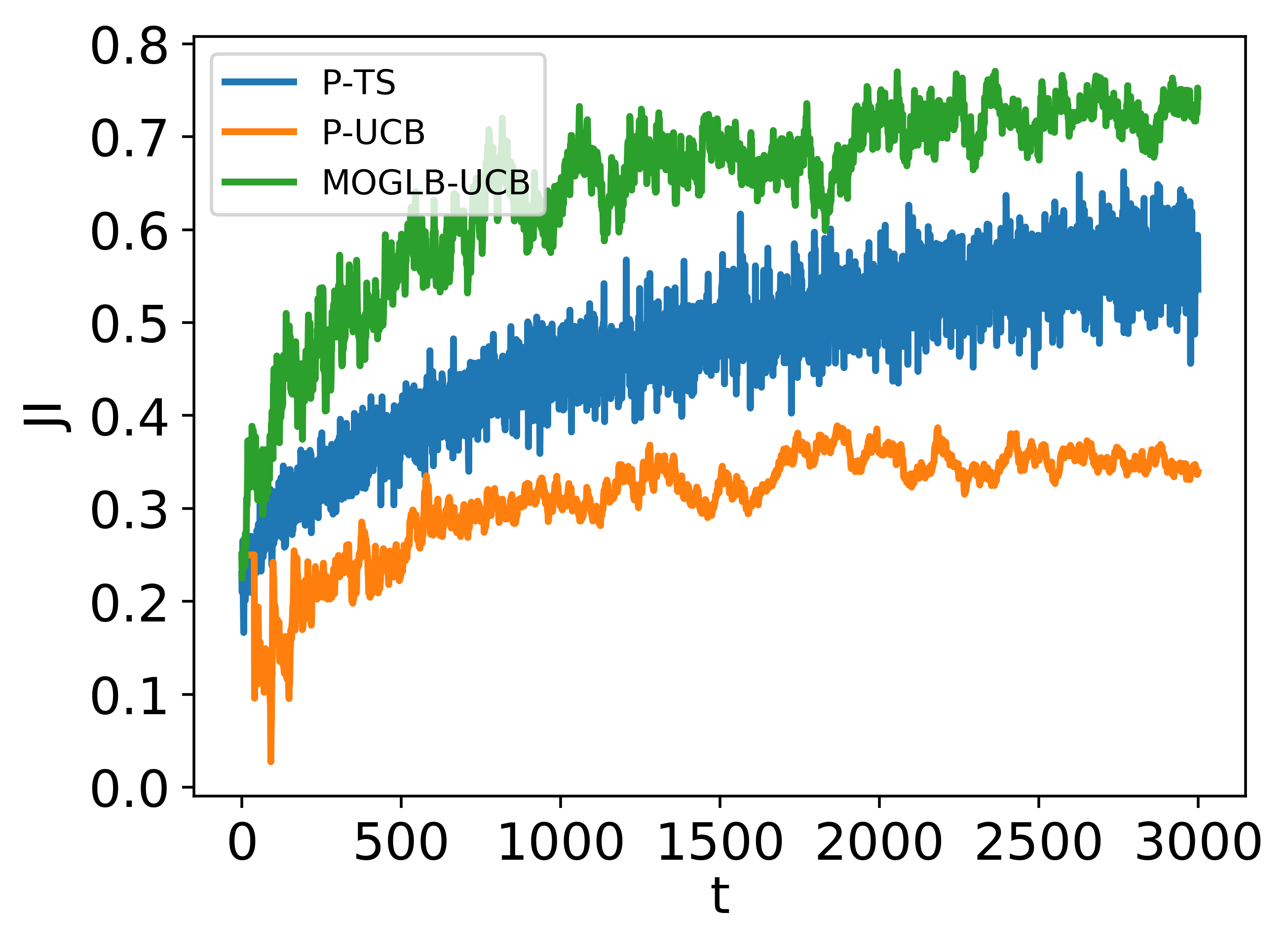}
   \caption{Jaccard index of different methods}
   \label{fig:exp1-ji}
\end{figure}

Finally, we would like to investigate the issue of fairness. To this end, we examine the approximate Pareto front $\mathcal{O}_{t}$ constructed by the tested algorithms except S-UCB and use  
Jaccard index (JI) to measure the similarity between $\mathcal{O}_{t}$ and the true Pareto front $\mathcal{O}^*$,
 defined as
 \begin{equation*}
    {\rm JI}_t \coloneqq \frac{\vert \mathcal{O}_{t} \cap \mathcal{O}^* \vert}{\vert \mathcal{O}_{t} \cup \mathcal{O}^* \vert}
\end{equation*}
for which the larger the better.

 We plot the curve of ${\rm JI}_t$ for each algorithm in Fig.~\ref{fig:exp1-ji}, where we set $d=10$.
 As can be seen, our algorithm finds the true Pareto front much faster than P-UCB and P-TS. Furthermore, the approximate Pareto front constructed by our algorithm is very close to the true Pareto front when $t > 1500$. Combining with the result shown in Fig.~\ref{fig:exp1-pr} and the uniform sampling strategy used in Step 3 of Algorithm \ref{alg:001}, we observe that our algorithm indeed minimizes the Pareto regret while ensuring fairness.

\section{Conclusion and Future Work}
In this paper, we propose a novel bandits framework named multi-objective generalized linear bandits, which extends 
the multi-objective bandits problem to contextual setting under the parameterized realizability assumption.
By employing the principle of ``optimism in face of uncertainty'', we develop a UCB-type algorithm whose Pareto regret is upper bounded by $\tilde O(d\sqrt{T})$, which matches the optimal regret bound for single objective contextual bandits problem. 

While we have conducted numerical experiments to show that the proposed algorithm is able to achieve high fairness, it is appealing to provide a theoretical guarantee regarding fairness. We will investigate this in future work.

%% The file named.bst is a bibliography style file for BibTeX 0.99c
\bibliographystyle{named}
\bibliography{MOGLB_Arxiv}

\end{document}